\theoremstyle{definition}        
\newcommand{\norm}[1]{\left\lVert#1\right\rVert}    
\newtheorem{prop}{Proposition}    
\title{Hierarchical Modes Exploring in Generative Adversarial Networks}
\author{Mengxiao Hu, \Large \textbf{Jinlong Li}, \Large \textbf{Maolin Hu, Tao Hu}\\ 
University of Science and Technology of China\\ 
m\_x\_hu@126.com, jlli@ustc.edu.cn, \{humaolin,Skyful\}@mail.ustc.edu.cn 
}
\begin{document}

\maketitle

\begin{abstract}
In conditional Generative Adversarial Networks (cGANs), when two different initial noises are concatenated with the same conditional information, the distance between their outputs is relatively smaller, which makes minor modes likely to collapse into large modes. To prevent this happen, we proposed a hierarchical mode exploring method to alleviate mode collapse in cGANs by introducing a diversity measurement into the objective function as the regularization term. We also introduced the Expected Ratios of Expansion (ERE) into the regularization term, by minimizing the sum of differences between the real change of distance and ERE, we can control the diversity of generated images w.r.t specific-level features. We validated the proposed algorithm on four conditional image synthesis tasks including categorical generation, paired and un-paired image translation and text-to-image generation. Both qualitative and quantitative results show that the proposed method is effective in alleviating the mode collapse problem in cGANs, and can control the diversity of output images w.r.t specific-level features. 
\end{abstract}

\section{Introduction}
    With the potentiality of capturing high dimensional probability distributions, Generative Adversarial Networks (GANs) \cite{173} are broadly used in synthesizing text \cite{174}, videos \cite{41} and images \cite{10}.  Conditional GANs (cGANS) \cite{178} are one of the early variants of GANs and have been applied in many tasks of image synthesis \cite{160} because of the ability to synthesizing images with given information (e.g, generating images of bird with given description of colors).
 
   Many generation tasks adopt GANs for its simple setting and impressive result, but we often suffer from the problem that the generator can only synthesize samples from few modes of  the real data distribution, which is called "mode collapse". The formal definition of mode collapse \cite{3} provides a theoretical measure of mode collapse.

In image synthesis, mode collapse means the output images are less diverse than the real ones. Therefore, it might be very important to quantify the diversity of output images for addressing the mode collapse problem. The Learned Perceptual Image Path Similarity (LPIPS) \cite{186} and the Fr\'{e}chet Inception Distance (FID) \cite{187} are often used to measure the diversity and the quality of output images,  respectively. The Number of Statistically-Different Bins (NDB) \cite{5} is also used for estimation of mode missing in the generated distribution \cite{115}. 
Unlike previous works only use the diversity metrics for evaluation, in this work, we proposed hierarchical Modes Exploring  Generative Adversarial Networks to alleviate the mode collapse problem in cGANs by introducing a diversity measurement into the objective function as a regularization term.

    The regularization term was employed to expand the difference of an output image pair which is obtained by feeding an input pair from the same batch to the generator \cite{115,175}. Here, we firstly compute the ratio of the distance between an input pair to the distance between an output feature pair and use it as a coefficient of the expansion at each convolutional layer of the generator. Then, we calculate the absolute difference between the computed ratio and a predefined ratio which is used to control the generated images with different features. At last, we sum the absolute differences across all layers as the regularization term of the objective function. Since  our regularization method requires no modification to the structure of original networks, it can be used in cGANs for various tasks.

   In this work, our primary contributions are: 
\begin{quote}
\begin{itemize}
    \item We proposeed a hierarchical mode exploring method to alleviate mode collapse in cGANs by introducing a diversity measurement into the objective function as the regularization term. 
    \item We introduced the Expected Ratio of Expansion (ERE) into the regularization term. With different ERE, we can control the diversity of generated images w.r.t specific-level features. 
    \item We demonstrated the proposed regularization method on different datasets in three image synthesis tasks, and experimental results show that our method can generate images with higher  diversity, compared with the baseline models.

\end{itemize}
\end{quote}

\section{Backgrounds}
    Generative adversarial networks (GANs) are composed of two players: the generator \textit{G} and the discriminator \textit{D}. The training process of GANs is a minimax game between \textit{D} and  \textit{G} . \textit{G} is learning to transform the initial noise to data with same dimension as real data’s, such that \textit{D} cannot tell whether the output was drawn from the true distribution or was generated by \textit{G}. The solution of this minimax game is a Nash equilibrium in which neither \textit{G} nor \textit{D} can improve unilaterally.
    
    In cGANs, because with the same concatenated information vectors, the distance between two inputs is smaller than the distance between two noise vectors, and the shrinkage of distance is very likely to be persevered through a upsampling layer (e.g., a fractionally-strided convolution layer) which is necessary for \textit{G}. So, we can expect the distance between two outputs of \textit{G} is smaller,compared to the one in standard GANs. To offset the shrinkage of distance, people used a regularization term $\mathcal{L}_{d}$ to maximize the distance between two output images \cite{115,175}.
    \begin{equation}\label{eq:1}
\begin{split}
    \mathcal{L}_{d} &= \norm{\frac{d^{(1)}(\mathbf{z_1},\mathbf{z_2})}{d^{(n)}(G^{(n)}(\mathbf{z_1}),G^{(n)}(\mathbf{z_2}))}} _{1}
\end{split}
\end{equation}
Where, $\mathbf{z}$ is the latent code, $d^{(i)}(\cdot)$ refers the distance metric, $G^{(i)}(\mathbf{z})$ is the output of $i$-th convolutional layer in the generator when the input is $\mathbf{z}$, and $n$ is the number of convolutional layers in the generator.

\section{Methods}
\subsection{Regularization term}
 Given an input pair $(\mathbf{z_1},\mathbf{z_2})$, we firstly measure the distance between the output pair of every convolutional layer, 
then, we compute the ratio of the distance at one layer to the distance at the next layer: 
\begin{equation}\label{eq:2}
ratio^{(i)} = \frac{d^{(i-1)}(G^{(i-1)}(\mathbf{z_1}),G^{(i-1)}(\mathbf{z_2}))}{d^{(i)}(G^{(i)}(\mathbf{z_1}),G^{(i)}(\mathbf{z_2}))}
\end{equation}
As shown by Eq. \eqref{eq:2}, we denote the sum of $L_1$ norms of the difference between the computed ratio and a target ratio as the regularization term:
\begin{equation}\label{eq:3}
\mathcal{L}_h = \sum_{i=2}^{n}\norm{ratio^{(i)} - \lambda^{(i)}} _1
\end{equation}
  Where, $\lambda^{(i)}$ is the hyper-parameter to control the diversity gain though $i$-th layer. When $\lambda^{(i)}=0$ for $\forall i \in \{2,...,n \}$, we maximize the diversity of the output images by minimizing the regularization term $\mathcal{L}_h$.

The proposed regularization method offsets the shrinkage of distance between two outputs at every layer in the generator. When we minimize $\mathcal{L}_h$ with $\lambda^{(i)}$ being set to an appropriate value, it can alleviate the mode collapse problem at every layer of the generator.

To illustrate the advantage of $\mathcal{L}_h$, we compared $\mathcal{L}_h$ with $\mathcal{L}_d$ proposed by Mao \textit{et al.}, and there is a true proposition.

\begin{prop}
Denotes the rate of $ratio_{(i)}$ converging to $0$ as $r^{(i)}$. If $\exists i\in\{2,...,n\}$ such that $r^{(i)} \gg r^{(i')}$ for $\forall i'\in\{2,...,n\}\setminus\{i\}$, in both training processes using $\mathcal{L}_{h}$ and $\mathcal{L}_{d}$, namely, $r^{(i)}_{l} \gg r^{(i')}_{l'}$, for $\forall l,l'\in\{\mathcal{L}_{h},\mathcal{L}_{d}\}$, then, $\exists H\in(0,+\infty), (\mathcal{L}_{h}\leq H, and, \mathcal{L}_{d}\leq H) \implies l(t_{h}) < l(t_{d})$. Here, when $l \leq H$, the corresponding training process is immediately stopped, $t_{h}$ is the stop time when the regularization term is $\mathcal{L}_{h}$.
\end{prop}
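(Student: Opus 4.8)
The plan is to strip both regularizers down to the common language of the per-layer ratios $ratio^{(i)}$ and then exploit the structural difference between them. First I would record the telescoping identity
\begin{equation*}
\prod_{i=2}^{n} ratio^{(i)} = \prod_{i=2}^{n}\frac{d^{(i-1)}(G^{(i-1)}(\mathbf{z_1}),G^{(i-1)}(\mathbf{z_2}))}{d^{(i)}(G^{(i)}(\mathbf{z_1}),G^{(i)}(\mathbf{z_2}))} = \frac{d^{(1)}(\mathbf{z_1},\mathbf{z_2})}{d^{(n)}(G^{(n)}(\mathbf{z_1}),G^{(n)}(\mathbf{z_2}))},
\end{equation*}
so that, for scalar positive distances, $\mathcal{L}_{d} = \prod_{i=2}^{n} ratio^{(i)}$ while, in the maximal-diversity regime $\lambda^{(i)}=0$, $\mathcal{L}_{h} = \sum_{i=2}^{n} ratio^{(i)}$. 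This is the crux: $\mathcal{L}_{d}$ is a product and $\mathcal{L}_{h}$ is a sum of the same nonnegative quantities, so a single small factor suffices to make $\mathcal{L}_{d}$ small, whereas $\mathcal{L}_{h}$ can be small only when every term is small.

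Next I would formalize the convergence-rate hypothesis. Writing $ratio^{(i)}(t)$ for the value reached after training time $t$ under a fixed regularizer $l$, I would model each as a positive quantity decaying to $0$ at rate $r^{(i)}_{l}$ (for instance $ratio^{(i)}(t)\asymp \exp(-r^{(i)}_{l}t)$, the natural behaviour of gradient flow near a minimizer), and read the dominance assumption $r^{(i)}\gg r^{(i')}$ as saying that on the relevant time scale the $i$-th ratio collapses while the others stay near their not-small initial values. Under this model the product is controlled by its fastest-decaying factor, so $\mathcal{L}_{d}$ crosses a prescribed level $H$ at a time $t_{d}$ essentially set by $r^{(i)}$ alone; the sum, by contrast, is controlled by its slowest-decaying term, so $\mathcal{L}_{h}$ cannot cross $H$ until the non-dominant ratios have also decayed, forcing a strictly larger stopping time $t_{h}>t_{d}$.

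I would then convert this timing gap into the stated conclusion. Because the $\mathcal{L}_{d}$ process halts at the early time $t_{d}$, the non-dominant ratios $ratio^{(i')}(t_{d})$ are still large, so the faithful per-layer diagnostic (the sum $\mathcal{L}_{h}$, which registers collapse at every layer) remains above $H$ there; at $t_{h}$ the $\mathcal{L}_{h}$ process has by construction driven that same diagnostic down to $H$. Comparing the two states under this common measure yields $l(t_{h})<l(t_{d})$. The quantifier on $H$ is handled last: I would choose $H\in(0,+\infty)$ inside the window where the two decay mechanisms have already separated, small enough that the dominant factor alone cannot also drag the sum below $H$, yet large enough that the sum can still reach it, which is exactly the regime in which the product has stopped early but the sum has not.

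The main obstacle will be making ``rate of converging to $0$'' and the relation $\gg$ rigorous enough to order the stopping times: the decay profile is not pinned down by the statement, and the rates $r^{(i)}_{l}$ are allowed to depend on which regularizer drives the training, so I must ensure the conclusion $t_{h}>t_{d}$ is robust to that dependence rather than to one convenient decay law. A secondary point requiring care is that the telescoping identity is exact only for scalar positive $d^{(i)}$; for norm-valued distances I would argue coordinatewise or impose the monotonicity needed to keep $\mathcal{L}_{d}$ equal to the product. Finally, verifying that a single threshold $H$ simultaneously meets both stopping conditions reduces to showing the separation window is non-empty, which is where I expect to spend the most effort.
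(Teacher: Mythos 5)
Your proposal is correct and follows essentially the same route as the paper: both exploit the fact that $\mathcal{L}_{d}$ telescopes into the product $\prod_{j}ratio^{(j)}$ while $\mathcal{L}_{h}$ (with $\lambda^{(i)}=0$) is the sum, model each ratio as $e^{-r^{(j)}t}$, observe that the single fast-decaying factor drives the product to $\approx 0$ while the slow terms keep the sum above $(n-2)a$, and choose $H$ in that gap to force $t_{d}<t_{h}$ and hence $l(t_{h})<l(t_{d})$. The robustness issue you flag (the rates depending on which regularizer drives training) is exactly the case analysis the paper appends at the end of its proof.
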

\begin{proof}
Assume $ratio^{(i)}_{\mathcal{L}_{h}}(t)=ratio^{(i)}_{\mathcal{L}_{d}}(t)=e^{-r^{(i)}_{\mathcal{L}_{h}}t}$. According to the condition in the proposition, assume  $ratio^{(i')}_{\mathcal{L}_{h}}(t) \approx ratio^{(i')}_{\mathcal{L}_{d}}(t) = e^{-r^{(i')}_{\mathcal{L}_{h}}t}$, and $r^{(i)}_{\mathcal{L}_{h}} \gg r^{(i')}_{\mathcal{L}_{h}}$, such that $e^{-r^{(i')}_{\mathcal{L}_{h}}\Bar{t}}>2e^{-r^{(i)}_{\mathcal{L}_{h}}\Bar{t}}$ for a big $\Bar{t}$, such that $e^{-r^{(i)}_{\mathcal{L}_{h}}\Bar{t}}\approx 0$.

Denotes $a=e^{-r^{(i')}_{\mathcal{L}_{h}}\Bar{t}}-e^{-r^{(i)}_{\mathcal{L}_{h}}\Bar{t}}$, so, $a>e^{-r^{(i)}_{\mathcal{L}_{h}}\Bar{t}}>0$, $a\in(0,1)$, and, $e^{-r^{(i')}_{\mathcal{L}_{h}}\Bar{t}}=a+e^{-r^{(i)}_{\mathcal{L}_{h}}\Bar{t}}$. 

It is convenient for the proof to assume $ratio^{(i)}_{\mathcal{L}_{h}}(t)=ratio^{(i)}_{\mathcal{L}_{d}}(t)$ , we will discuss the other cases later.

According to Eq. \eqref{eq:1},
\begin{equation}\label{eq:4}
\begin{split}
    \mathcal{L}_{d}(\Bar{t}) &= \norm{\frac{d^{(1)}(\mathbf{z_1},\mathbf{z_2})}{d^{(n)}(G^{(n)}(\mathbf{z_1}),G^{(n)}(\mathbf{z_2}))}} _{1} = \prod_{j=2}^{n} \norm{ratio^{(j)}_{\mathcal{L}_{d}}(\,\Bar{t}\,)}_{1} \\ 
    &= (a+e^{-r^{(i)}_{\mathcal{L}_{h}}\Bar{t}})^{n-2}e^{-r^{(i)}_{\mathcal{L}_{h}}\Bar{t}}\approx 0 \approx e^{-r^{(i)}_{\mathcal{L}_{h}}\Bar{t}}
\end{split}
\end{equation}
According to Eq. \eqref{eq:2}, when $\lambda_{i} = 0$,
\begin{equation}\label{eq:5}
\begin{split}
    \mathcal{L}_{h}(\Bar{t}) &=\sum_{j=2}^{n} \norm{ratio^{(j)}_{\mathcal{L}_{h}}(t)(\,\Bar{t}\,)}_{1} \\
    &= (n-2)(a+e^{-r^{(i)}_{\mathcal{L}_{h}}\Bar{t}})+e^{-r^{(i)}_{\mathcal{L}_{h}}\Bar{t}} \\
    &> (n-2)a > 0
\end{split}
\end{equation}
Denotes $H=(n-2)a$, so, $H\in(0,+\infty)$, and, at time $\Bar{t}$, $\mathcal{L}_{d}(\Bar{t})<H$, the corresponding training process has been stopped, suppose it stopped at time $t_d$, $t_{d}\leq \Bar{t}$; and at time $\Bar{t}$, $\mathcal{L}_{h}(\Bar{t})>H$, the corresponding training process will be stopped at time $t_h$, $t_{h}> \Bar{t}>t_{d}$; therefore, according to the monotonicity of $l$, $l(t_h)<l(t_d)$.

If $ratio^{(i)}_{\mathcal{L}_{h}}(t)>ratio^{(i)}_{\mathcal{L}_{d}}(t)$, namely, $r^{(i)}_{\mathcal{L}_{h}}<r^{(i)}_{\mathcal{L}_{d}}$, denotes the new stop time as $t^{>}_{d}$ and the new regularization term as $\mathcal{L}^{>}_{d}$, so, when $\mathcal{L}_{d}({t_d})=\mathcal{L}^{>}_{d}({t^{>}_{d}})$, according to Eq. \eqref{eq:4}, $e^{-r^{(i)}_{\mathcal{L}_{h}}t_{d}}=e^{-r^{(i)}_{\mathcal{L}_{d}}t^{>}_{d}}$,  $t^{>}_{d}=(r^{(i)}_{\mathcal{L}_{h}}/r^{(i)}_{\mathcal{L}_{d}})t_{d}<t_{d}$, so, $t_{h}>t_{d}>t^{>}_{d}$; therefore, $l(t_h)<l(t^{>}_{d})$.

If $ratio^{(i)}_{\mathcal{L}_{h}}(t)<ratio^{(i)}_{\mathcal{L}_{d}}(t)$, according to the condition in the proposition, assume  $ratio^{(i')}_{\mathcal{L}_{h}}(t) \approx ratio^{(i')}_{\mathcal{L}_{d}}(t)$, namely, $r^{(i')}_{\mathcal{L}_{d}} \approx r^{(i')}_{\mathcal{L}_{h}}$, and $r^{(i)}_{\mathcal{L}_{d}} \gg r^{(i')}_{\mathcal{L}_{d}}$, such that $e^{-r^{(i')}_{\mathcal{L}_{d}}\Bar{t}^{<}}>2e^{-r^{(i)}_{\mathcal{L}_{d}}\Bar{t}^{<}}$ for a big $\Bar{t}^{<}$, such that $e^{-r^{(i)}_{\mathcal{L}_{d}}\Bar{t}^{<}}\approx 0$, and Eq. \eqref{eq:4} and Eq. \eqref{eq:5} are still satisfied when $t=\Bar{t}^{<}$, the only difference between the old equations and the new equations is that $a$ in the new Eq. \eqref{eq:4} and Eq. \eqref{eq:5} is not the same, for clarity, denotes $a$ in the new Eq. \eqref{eq:4} and Eq. \eqref{eq:5} as $a_{4}$ and $a_{5}$, $a_{4}=e^{-r^{(i')}_{\mathcal{L}_{d}}\Bar{t}^{<}}-e^{-r^{(i)}_{\mathcal{L}_{d}}\Bar{t}^{<}}$, $a_{5}=e^{-r^{(i')}_{\mathcal{L}_{h}}\Bar{t}^{<}}-e^{-r^{(i)}_{\mathcal{L}_{h}}\Bar{t}^{<}}$; because $e^{-r^{(i)}_{\mathcal{L}_{d}}\Bar{t}^{<}}>e^{-r^{(i)}_{\mathcal{L}_{h}}\Bar{t}^{<}}$ and $r^{(i')}_{\mathcal{L}_{d}} \approx r^{(i')}_{\mathcal{L}_{h}}$, $a_{5}>a_{4}$; similarly, we can choose $H^{<}=(n-2)a_{5}$ and $t^{<}_h, t^{<}_d$ as stop time, then, the same conclusion can be deduced when $ratio_{i}^{\mathcal{L}_{h}}(t)=ratio_{i}^{\mathcal{L}_{d}}(t)$, namely, $l(t^{<}_h)<l(t^{<}_d)$.

\end{proof}

Proposition 1 shows that, when there exists a layer whose $ratio^{(i)}$ converges much faster than other layers', $\mathcal{L}_{d}$ converges more quickly than $\mathcal{L}_{h}$, which means the training process supervised by $\mathcal{L}_{d}$ stops earlier than the one supervised by $\mathcal{L}_{h}$, even though their cost are the same in any form of $\mathcal{L}_{h}$ or $\mathcal{L}_{d}$.

Since we use $\lambda^{(i)}$ to independently control $ratio^{(i)}$ at each layer, we need not assign different weights to them, therefore, the computation cost of searching the only weight of $\mathcal{L}_{h}$ is $\mathcal{O}(n)$. However, if we use $\mathcal{L}_{d}$ as the term to adjust the change of distance through specific layer of \textit{G}, the computation cost for searching the weights of terms grows exponentially with $n$.

\subsection{Expected Ratio of Expansion}

In Eq. \eqref{eq:3}, $\lambda^{(i)}$ controls the diversity of the output. For example, when $\lambda^{(i)} = 1$,  the $i$-th convolutional layer is encouraged to not change the distance; when $\lambda^{(i)}=0$ for $\forall i \in \{2,...,n \}$, every layer of the generator is to maximize the distance between the output images. $\pmb{\lambda}$ is set as the target ratio of the expansion, we call it the Expected Ratio of Expansion (ERE) in this work.

In practice, it is important to determine the value of $\lambda^{(i)}$, since when $\lambda^{(i)}$ is larger than $1$, the diversity is encouraged not to be increased, and because the distance cannot be $+\infty$, we cannot increase the diversity by setting $\lambda^{(i)}$ lower than its lower bound. Therefore, we restrict $\lambda^{(i)} \in [b^{(i)},1]$.

To compute $b^{(i)}$, there are two steps, firstly, we pre-trained the cGANs using:
\begin{equation}\label{eq:6}
\mathcal{L}_{fin} = \mathcal{L}_{ori}+ \beta \mathcal{L}_{h}^{0}
\end{equation}
Where, $\beta$ is the weight to manipulate the importance of the regularization. $\mathcal{L}_{h}^{0}$ denotes the $\mathcal{L}_h$ with all $\lambda^{(i)}=0$, and $\mathcal{L}_{ori}$ is the objective function used in the cGANs framework into which we integrated the proposed method. Then, we fed the whole dataset $X$ into the generator to calculate the ratio matrix: 
\begin{equation}\label{eq:7}
    A^{(i)} = 
    \begin{bmatrix}
    d_{11} &  \dots  & d_{1m} \\
    \vdots &  \ddots & \vdots \\
    d_{m1} &  \dots  & d_{mm}
\end{bmatrix}
\end{equation}
Here, $m$ is the size of $X$, $d_{uv} =  \frac{d^{(i-1)}(G^{(i-1)}(\mathbf{z_u}),G^{(i-1)}(\mathbf{z_v}))}{d^{(i)}(G^{(i)}(\mathbf{z_u}),G^{(i)}(\mathbf{z_v}))}$. If we choose $L_{1}$ norm as $d^{(i)}(\cdot)$ for $\forall i$, then $A^{(i)}$ can be calculated by: 
\begin{equation}\label{eq:8}
\begin{split}
    (A^{(i)})^{u}_{\,\,\, v} = \frac{|o^{u}M_{v,p}-o^{v}M_{u,p}|}{|o^{u}N_{v,q}-o^{v}N_{u,q}|}
\end{split}
\end{equation}
Here, $M$ and $N$ are the $1\times m \times f^{(i)}$ matrices output by  $(i-1)$-th layer and $i$-th layer, respectively, $f^{(i)}$ is the  dimension of output by $G^{(i)}(\cdot)$, $o$ is $\Vec{1}_{m\times 1}$.

Then, $b^{(i)}$ is determined as the minimum element of $A^{(i)}$,
 $b^{(i)} = min(d_{11}, ..., d_{jk}, ..., d_{mm})$.
    Since it requires 2 loops to calculate all $d_{uv}$ to determine $A^{(i)}$, the time complexity of naively computing $b^{(i)}$ is $O(m^{2})$. Eq. \eqref{eq:8} provides a way to compute $A^{(i)}$ in the form of tensors to compute $b^{(i)}$ with complexity $O(m)$, because tensor operation of a batch can be executed on GPU in parallel.

\begin{figure*}[h]
\centering
\includegraphics[width=160mm]{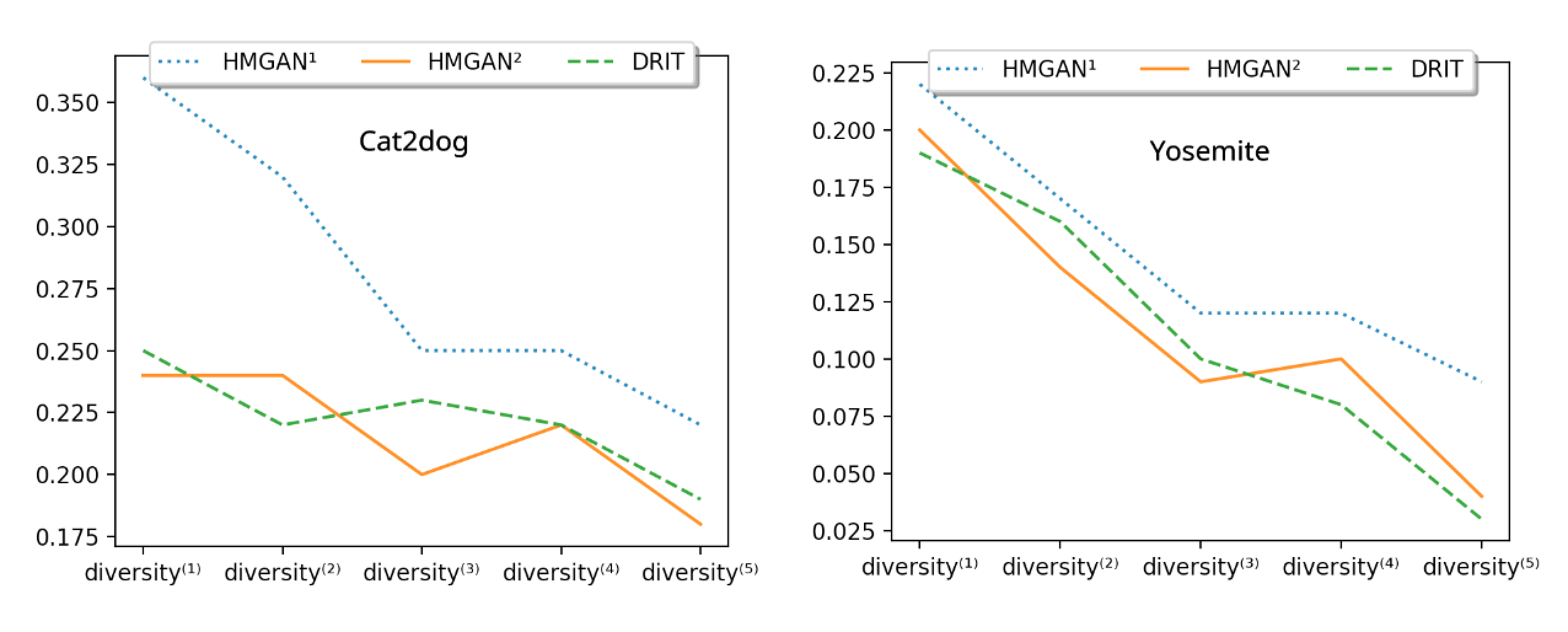}

\caption{Visualizing diversity of image batch w.r.t different-level features. HMGAN$^{1}$ refers $\lambda^{(i)}=0$ for $\forall{i}$, and HMGAN$^{2}$ refers $\lambda^{(i)}=1$. We measured the diversity with LPIPS which uses Alexnet as feature decoder, it provides outputs from 5 layers.} 
\label{figure:2}
\end{figure*}

\section{Experiments}
 To validate our regularization method under an extensive evaluation, we incorporated four baseline models (DCGAN \cite{194}, Pix2Pix \cite{191}, DRIT \cite{183} and StackGAN++ \cite{179}) with it for three conditional image synthesis tasks: 
\begin{itemize}
    \item Categorical generation, it is trained on CIFAR-10 \cite{192} using DCGAN as the baseline model.
    \item Image-to-image translation, it can be divided into two subtasks:
    \begin{itemize}
        \item Paired image-to-image translation, it is trained on facades  and maps using Pix2Pix as the baseline model.
    \item Unpaired image-to-image translation, it is trained on Yosemite \cite{195} and cat$\rightleftharpoons$dog \cite{183} using DRIT as the baseline model.
    \end{itemize}
    \item Text-to-image generation, it is trained on CUB-200-2011 \cite{196} using StackGAN++ as the baseline model.
\end{itemize}

Because the original networks of the baseline model do not change after adding the attention unit and the regularization term, we kept the hyper-parameters of the baseline model original. 

We adopted $L_1$ norm as distance metrics for all $d^{(i)}(\cdot)$ and set the weight of regularization $\beta = 1$ in all experiments.

\begin{table*}[ht]
\centering

\caption{NDB and JSD results on the CIFAR-10 dataset.}
\label{table:1}

\begin{tabular}{p{2cm}p{2cm}p{2cm}p{2cm}p{2cm}p{2cm}p{2cm}}
\hline
Metrics              & Models & airplane & automobile & bird  & cat   & deer  \\ \cline{3-7} 
\multirow{3}{*}{NDB $\downarrow$} & DCGAN  & $49.60\pm 3.50$    & $53.30\pm 6.34$      & $34.30\pm 5.71$ & $46.00\pm 2.65$ & $43.90\pm4.12$ \\
                     & HMGAN$^{1}$  & $\pmb{45.30\pm5.24}$    & $\pmb{51.50\pm 3.13}$      & $\pmb{33.20\pm 2.02}$ & $\pmb{42.00\pm 1.47}$ & $\pmb{42.20\pm 4.37}$ \\
                     & HMGAN$^{2}$  & $\pmb{48.70\pm 5.13}$    & $\pmb{52.90\pm2.98}$      & $\pmb{34.00\pm 2.38}$ & $\pmb{45.50\pm 2.12}$ & $\pmb{43.50\pm 4.14}$ \\ \hline
\multirow{3}{*}{JS $\downarrow$}  & DCGAN  & $0.035\pm0.002$    & $0.035\pm0.002$      & $0.026\pm 0.002$ & $0.031\pm 0.001$ & $0.033\pm 0.002$ \\
                     & HMGAN$^{1}$  & $\pmb{0.028\pm 0.002}$    & $\pmb{0.029\pm 0.002}$      & $\pmb{0.024\pm 0.001}$ & $\pmb{0.026\pm 0.001}$ & $\pmb{0.029\pm 0.002}$ \\
                     & HMGAN$^{2}$  & $\pmb{0.033\pm 0.00}2$    & $\pmb{0.034\pm 0.001}$      & $\pmb{0.026\pm 0.001}$ & $\pmb{0.029\pm 0.001}$ & $\pmb{0.032\pm 0.002}$ \\ \hline
                     &        & dog      & frog       & horse & ship  & truck \\ \cline{3-7} 
\multirow{3}{*}{NDB $\downarrow$} & DCGAN  & $51.80\pm 3.92$    & $53.20\pm 4.27$      & $55.00\pm 2.81$ & $43.50\pm 5.00$ & $45.50\pm 5.05$ \\
                     & HMGAN$^{1}$  & $\pmb{34.00\pm 2.80}$    & $\pmb{41.60\pm 3.55}$      & $\pmb{46.50\pm 5.83}$ & $\pmb{41.50\pm 3.04}$ & $\pmb{43.20\pm 3.01}$ \\
                     & HMGAN$^{2}$  & $52.00\pm 3.02$    & $\pmb{52.10\pm 3.22}$      & $\pmb{53.90\pm 4.32}$ & $\pmb{42.80\pm 3.05}$ & $\pmb{45.30\pm 5.00}$ \\ \hline
\multirow{3}{*}{JS $\downarrow$}  & DCGAN  & $0.035\pm 0.002$    & $0.035\pm 0.002$      & $0.036\pm 0.001$ & $0.030\pm 0.002$ & $0.034\pm 0.002$ \\
                     & HMGAN$^{1}$  & $\pmb{0.024\pm 0.001}$    & $\pmb{0.029\pm 0.001}$      & $\pmb{0.032\pm 0.002}$ & $\pmb{0.027\pm 0.002}$ & $\pmb{0.028\pm 0.002}$ \\
                     & HMGAN$^{2}$  & $0.035\pm 0.002$    & $\pmb{0.034\pm 0.002}$      & $\pmb{0.031\pm 0.002}$ & $\pmb{0.028\pm 0.002}$ & $0.034\pm 0.002$ \\ \hline
\end{tabular}

\end{table*}

\begin{table}[hb]
\centering

\caption{FID and LPIPS results on the CIFAR-10 dataset.}
\label{table:2}
\begin{tabular}{p{2cm}p{2cm}p{2cm}}
\hline
Model & DCGAN & HMGAN \\ \hline
FID $\downarrow$   & $32.21\pm 0.05$ & $\pmb{28.84\pm 0.05}$ \\ \hline
LPIPS $\uparrow$ & $0.208\pm 0.002$ & $\pmb{0.209\pm 0.002}$ \\ \hline
\end{tabular}
\end{table}

\subsection{Evaluation metrics}
To evaluate the quality of the generated images, we used FID \cite{187} to measure the difference between the distribution of generated images and the distribution of real images. To compute FID, a pretrained Inception Network \cite{192} needed for extracting features of images. Lower FID indicate higher quality of the generated images.  

To evaluate diversity, we employed LPIPS \cite{186}. 
\begin{equation}\label{eq:9}
    d(\mathbf{x_1},\mathbf{x_2}) = \sum_{l}\frac{1}{H^{(l)}W^{(l)}}\sum_{h,w}\norm{w^{(l)}\odot(E_{hw}^{(l)}(\mathbf{x_1})-E_{hw}^{(l)}(\mathbf{x_2})}_{2}^{2}
\end{equation}

\begin{equation}\label{eq:10}
    diversity_{output}= \sum_{j=1}^{m}\sum_{k=1,k\neq j}^{m}d(\mathbf{x_j},\mathbf{x_k})
\end{equation}

Because deeper convolutional layers detect higher-level features \cite{190}, it is natural to measure the diversity w.r.t specific-level feature with a specific-$l$-th term in Eq. \eqref{eq:9}, 
\begin{equation}\label{eq:11}
    d^{(l)}(\mathbf{x_1},\mathbf{x_2}) = \frac{1}{H^{(l)}W^{(l)}}\sum_{h,w}\norm{w^{(l)}\odot(E_{hw}^{(l)}(\mathbf{x_1})-E_{hw}^{(l)}(\mathbf{x_2})}_{2}^{2}
\end{equation}
The $diversity^{(l)}$ is similarly computed as Eq. \eqref{eq:10} does. To statistically view the diversity of an image batch w.r.t different-level features, we visualized all $diversity^{(l)}$, as shown in figure \ref{figure:2}. Higher LPIPS means the generated images are more diverse.

To test the generated images and the real images are from the same distribution, we employed the NDB score \cite{5}. To compute NDB score, it first put all real and generated samples into bins, then, the numbers of the real images and the generated images in one bin are used to decide if those two numbers are statistically different, finally, the number of all statistically different bins defines the NDB score. The bins are the result from a K-means clustering. In other words, the K-means clustering finds k modes, so, we can not only estimate the similarity between two distributions by comparing the NDB scores but can also tell which mode has collapsed by referring the indices of statistically different bins. However, there is a trade-off between a less number of bins (less computation for the clustering) and a higher accuracy of the estimation, we presented the Jensen–Shannon divergence to validate the NDB scores, and to find a proper number of bins during the experiment. Lower NDB and JSD mean the generated images are more likely from the real distribution.

\begin{table*}[h]
\centering
\caption{Quantitative results from paired image-to-image translation task.}
\label{table:3}
\begin{tabular}{p{2cm}p{2cm}p{2cm}p{2cm}}
\hline
Datasets  & \multicolumn{3}{l}{Facades}    \\ \cline{2-4} 
          & Pix2Pix  & HMGAN$^{1}$ & HMGAN$^{2}$ \\ \cline{2-4} 
FID $\downarrow$ & $140.00\pm 2.57$    & $\pmb{90.00\pm 3.25}$    & $\pmb{138.80\pm 2.00}$    \\
NDB $\downarrow$ & $16.00\pm 0.38$    & $\pmb{12.30\pm 0.32}$    & $16.12\pm 0.59$    \\
JSD $\downarrow$ & $0.078\pm 0.003$    & $\pmb{0.028\pm 0.006}$    & $0.080\pm 0.004$    \\
LPIPS $\uparrow$ & $0.005\pm 0.001$    & $\pmb{0.192\pm 0.001}$    & $\pmb{0.007\pm 0.001}$    \\ \hline
Datasets  & \multicolumn{3}{l}{Maps}       \\ \cline{2-4} 
           & Pix2Pix  & HMGAN$^{1}$ & HMGAN$^{2}$ \\ \cline{2-4} 
FID $\downarrow$ & $165.80\pm 3.21$    & $\pmb{153.60\pm 2.50}$    & $\pmb{164.50\pm 2.39}$    \\
NDB $\downarrow$ & $47.30\pm 2,35$    & $\pmb{42.00\pm 2.52}$    & $\pmb{46.80\pm 3.52}$    \\
JSD $\downarrow$ & $0.072\pm 0.023$    & $\pmb{0.035\pm 0.003}$    & $0.076\pm 0.025$    \\
LPIPS $\uparrow$ & $0.003\pm 0.001$    & $\pmb{0.205\pm 0.001}$    & $0.003\pm 0.001$    \\ \hline
\end{tabular}
\end{table*}

\begin{figure}[h]
\centering
\includegraphics[width=80mm]{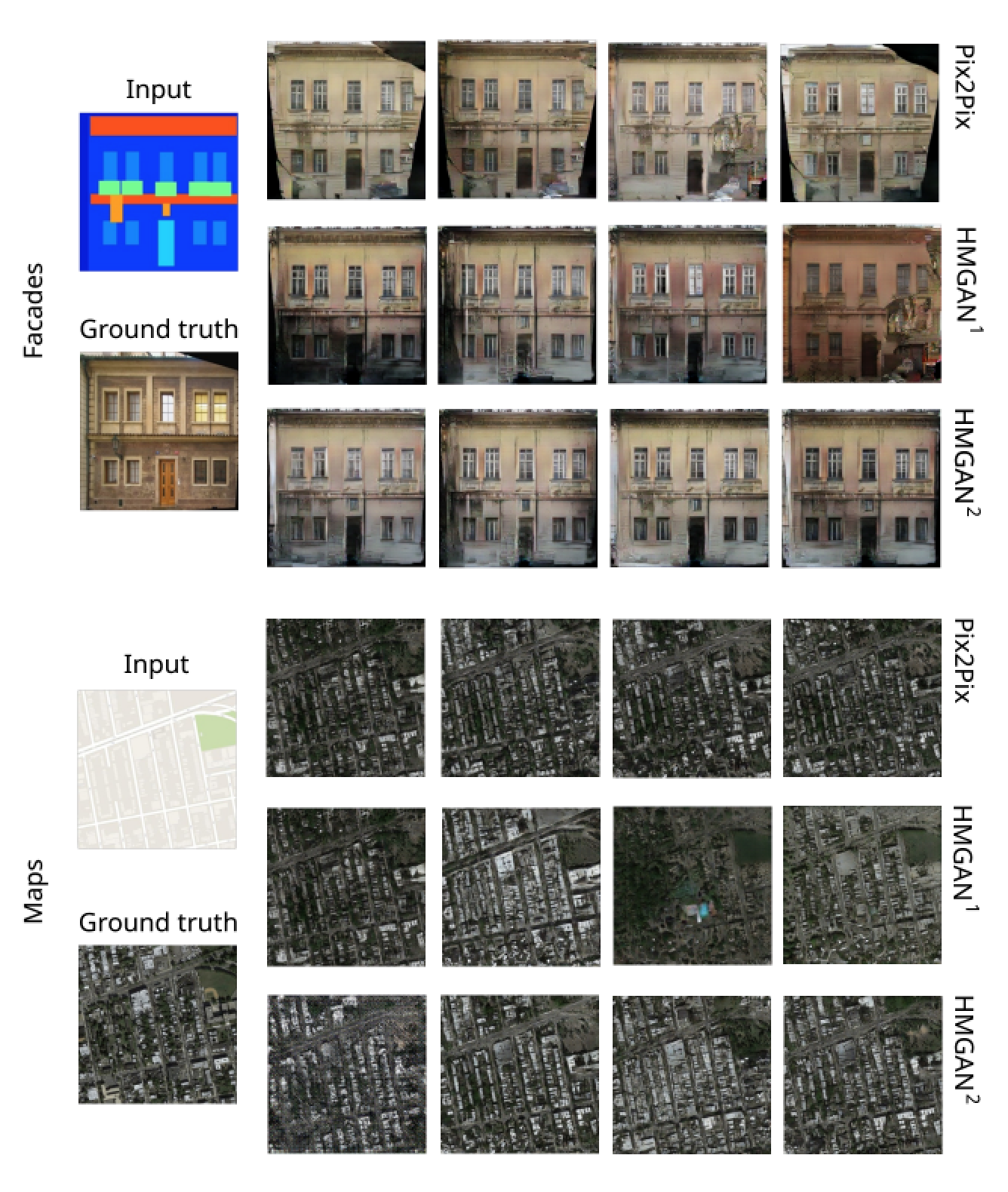}

\caption{Diversity comparison. HMGAN$^{1}$ learns more diverse results and HMGAN$^{2}$ learns less diverse results.}
\label{figure:3}
\end{figure}

\subsection{Categorical generation}
Firstly, we validated the regularization method on categorical generation task. In categorical generation, the generator takes the initial noise concatenated with class labels as input to generate images in corresponding  categories.
This task is conducted on the CIFAR-10 dataset. It has images with size $32\times 32$ in 10 categories. The NDB scores and JSD are reported in Table \ref{table:1}, and Table \ref{table:2} presents the results FID and LPIPS. The proposed method alleviates the mode collapse problem in most categories and maintains the image quality.

\begin{table*}[h]
\centering
\caption{Quantitative results from unpaired image-to-image translation task.}
\label{table:4}
\begin{tabular}{@{\extracolsep{4pt}}p{2.5cm}p{2.5cm}p{2.5cm}p{2.5cm}p{2.5cm}@{}}
\hline
Datasets  & \multicolumn{2}{l}{Summer2Winter} & \multicolumn{2}{l}{Winter2Summer} \\ \cline{2-3} \cline{4-5} 
          & DRIT        & HMGAN$^{1}$        & DRIT        & HMGAN$^{1}$        \\
FID $\downarrow$ & $55.03\pm 3.26$           & $\pmb{50.00\pm 3.23}$           & $47.00\pm 4.28$           & $\pmb{46.20\pm 3.38}$           \\
NDB $\downarrow$ & $25.50\pm 3.35$           & $\pmb{23.00\pm 0.25}$           & $29.00\pm 2.47$           & $\pmb{27.50\pm 2.55}$           \\
JSD $\downarrow$ & $0.062\pm 0.003$           & $\pmb{0.052\pm 0.003}$           & $0.050\pm 0.007$           & $\pmb{0.038\pm 0.005}$           \\
LPIPS $\uparrow$ & $0.112\pm 0.001$           & $\pmb{0.143\pm 0.001}$           & $0.112\pm 0.001$           & $\pmb{0.119\pm 0.001}$           \\ \hline
Datasets  & \multicolumn{2}{l}{Cat2Dog}       & \multicolumn{2}{l}{Dog2Cat}       \\ \cline{2-3} \cline{4-5}
          & DRIT        & HMGAN$^{1}$        & DRIT        & HMGAN$^{1}$       \\
FID $\downarrow$ & $22.50\pm 0.35$           & $\pmb{16.02\pm 0.35}$           & $59.05\pm 0.31$           & $\pmb{28.97\pm 0.54}$           \\
NDB $\downarrow$ & $39.28\pm 3.36$           & $\pmb{27.00\pm 0.50}$           & $41.32\pm 0.52$           & $\pmb{32.23\pm 0.53}$           \\
JSD $\downarrow$ & $0.125\pm 0.003$           & $\pmb{0.085\pm 0.001}$           & $0.269\pm 0.002$           & $\pmb{0.071\pm 0.001}$           \\
LPIPS $\uparrow$ & $0.250\pm 0.002$           & $\pmb{0.280\pm 0.002}$           & $0.100\pm 0.002$           & $\pmb{0.220\pm 0.003}$           \\ \hline
\end{tabular}
\end{table*}

\begin{table*}[]
\centering
\caption{Quantitative results from text-to-image generation task. HMGAN$^{3}$ refers $\lambda^{(i)}=0.5$ for $\forall{i}$.}
\label{table:5}
\begin{tabular}{@{\extracolsep{4pt}}p{2.5cm}p{2.5cm}p{2.5cm}p{2.5cm}p{2.5cm}@{}}
\hline
          
          & StackGAN++                  & HMGAN$^{1}$ & HMGAN$^{2}$              & HMGAN$^{3}$             \\
FID $\downarrow$ & $26.00\pm 4.23$                     & $\pmb{25.40\pm 2.00}$                    & $27.00\pm 1.25$                 & $\pmb{25.55\pm 1.50}$                 \\
NDB $\downarrow$ & $37.80\pm 2.44$                     & $\pmb{29.90\pm 2.55}$                    & $\pmb{37.55\pm 1.83}$                 & $\pmb{30.00\pm 3.82}$                 \\
JSD $\downarrow$ & $0.091\pm 0.005$                     & $\pmb{0.070\pm 0.005}$                    & $0.093\pm 0.005$                 & $\pmb{0.072\pm 0.003}$                 \\
LPIPS $\uparrow$ & $0.364\pm 0.005$                     & $\pmb{0.376\pm 0.005}$                    & $0.358\pm 0.005$                 & $\pmb{0.374\pm 0.002}$                 \\ \hline
\end{tabular}
\end{table*}

\begin{figure*}[h]
\centering
\includegraphics[width=160mm]{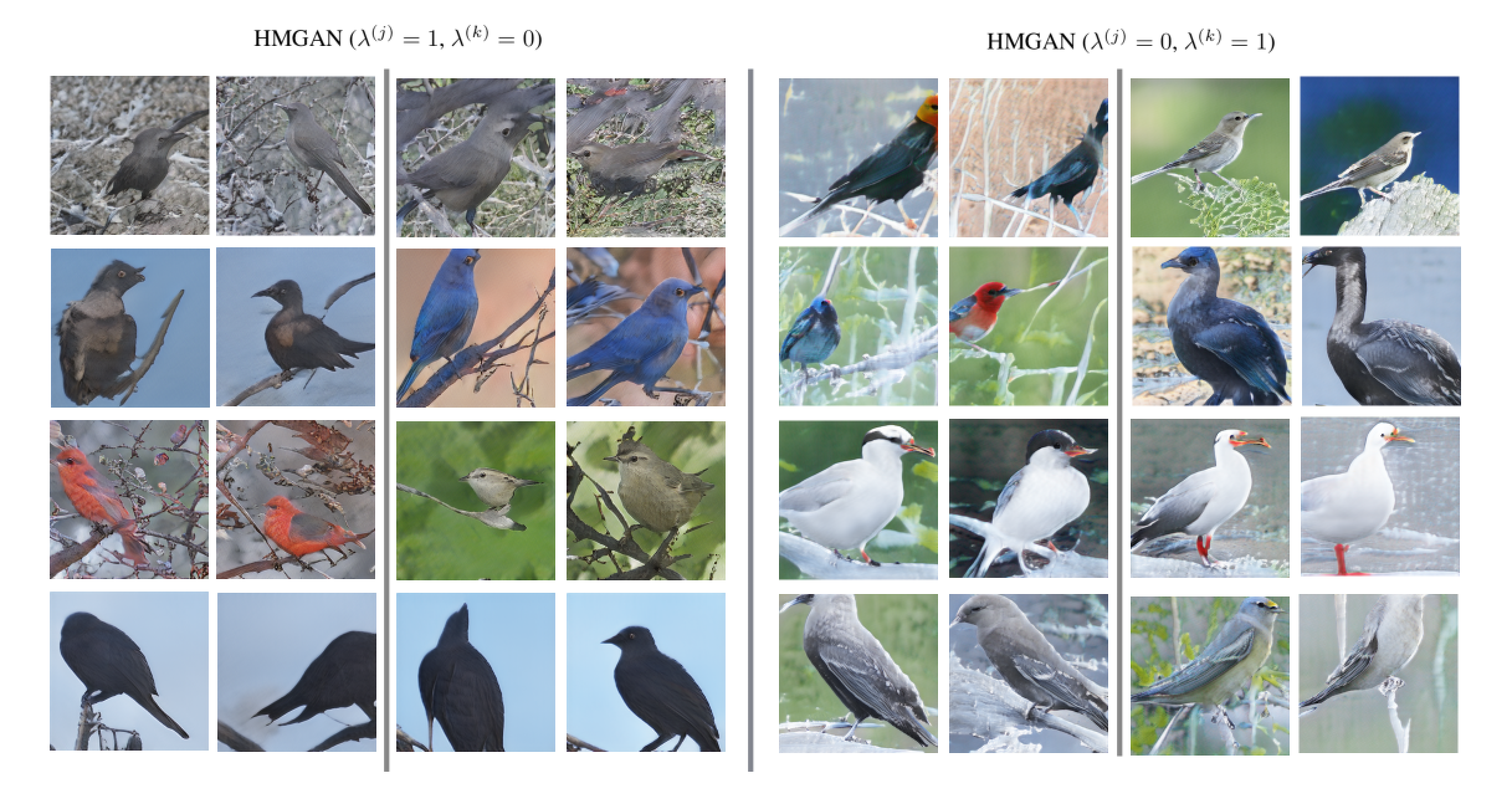}

\caption{Diversity comparison. Each pair is conditioned by the same sentence. Since StackGAN++ has 15 convolutional layers, the 6~15th layers are designed to improved the resolution of the 5th layer's output, the diversity w.r.t high-level features is controlled by 4th and 5th layer. Here, $j\in \{1,2,3,6,7,8,9,10,11,12,13,14,15\}$ and $k\in \{4,5\}$.}
\label{figure:4}
\end{figure*}

\subsection{Image-to-image translation}
\textit{Conditioned on paired images}

In this task, we integrated the proposed method into Pix2Pix.  In experiments, we kept the original hyper-parameters setting of Pix2Pix for fair comparison. Figure \ref{figure:3} and Table \ref{table:3} shows the qualitative and quantitative results, respectively. It is shown that the proposed method exceeds Pix2Pix in terms of all metrics when $\lambda^{(i)}=0$ for $\forall i$, and the output images from the proposed method have comparable diversity to the ones from Pix2Pix when $\lambda^{(i)}=1$ for $\forall i$. The low quality of generated facades images might be caused by encouraging diversity too much \cite{175}, since setting $\lambda^{(i)}=0$ for $\forall i$ regularizes the training more strictly than minimizing $\mathcal{L}_{d}$.

\noindent
\textit{Conditioned on unpaired images}

To generate images when paired images are not available, we chose DRIT  as the baseline model. It is pointed out that DRIT can generate diverse images only w.r.t low-level features, in other words, the output images share similar structures. To demonstrate the proposed method can improve the diversity w.r.t high-level features, we conducted this experiment on cat$\rightleftharpoons$dog dataset whose images has shape variations. We also compared the abilities of generating diverse image w.r.t low-level features between the proposed method and DRIT, this experiment is conducted on shape-invariant Yosemite dataset.

Table \ref{table:4} shows that the proposed method outperforms DRIT in terms of all metric in both experiments, especially on the cat$\rightleftharpoons$dog dataset. To quantitively present the difference of ability to generate diverse images w.r.t different-level features, we ploted all $diversity^{(l)}$  in figure \ref{figure:2}. Figure \ref{figure:2}. shows that our proposed method improved the diversity w.r.t high-level features, and has comparable ability to generate diverse images w.r.t low-level features. 

\begin{figure}[h]
\centering
\includegraphics[width=80mm]{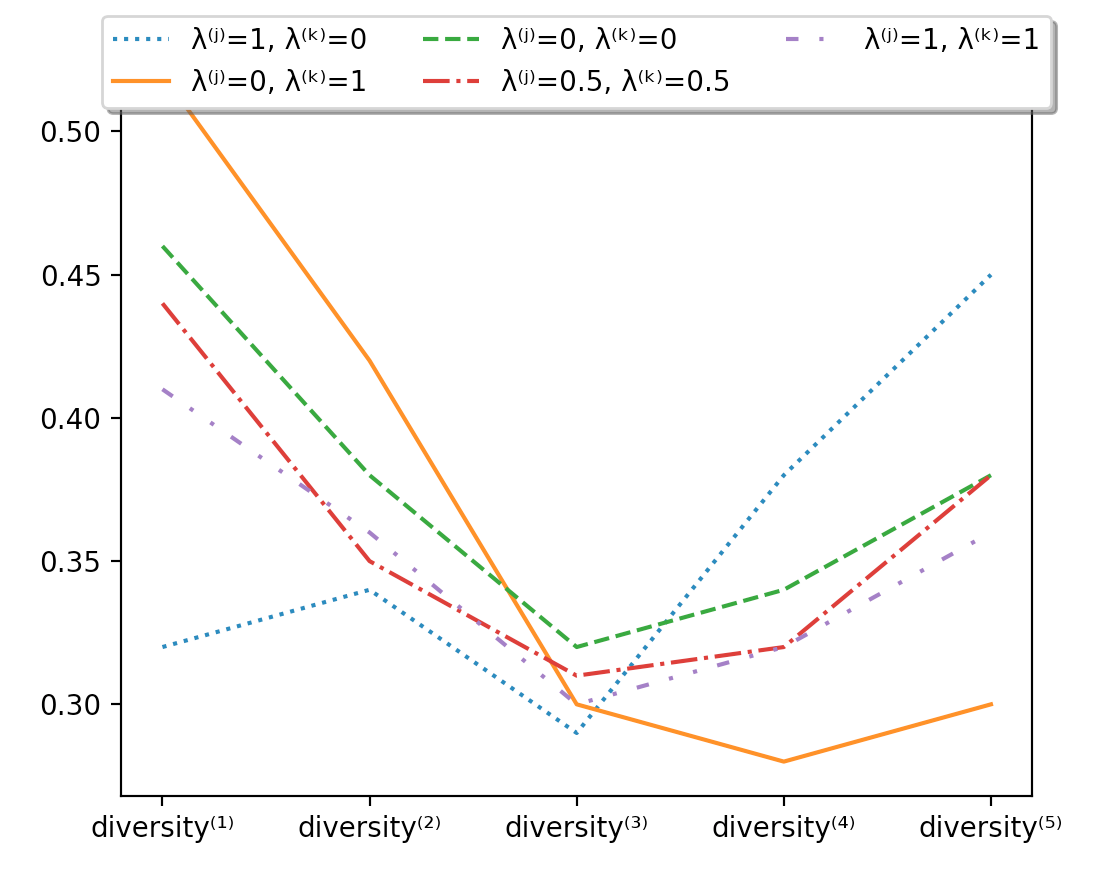}

\caption{Visualizing diversity of image batch w.r.t different-level features in text-to-image task.}
\label{figure:5}
\end{figure}

\begin{figure}[h]
\centering
\includegraphics[width=80mm]{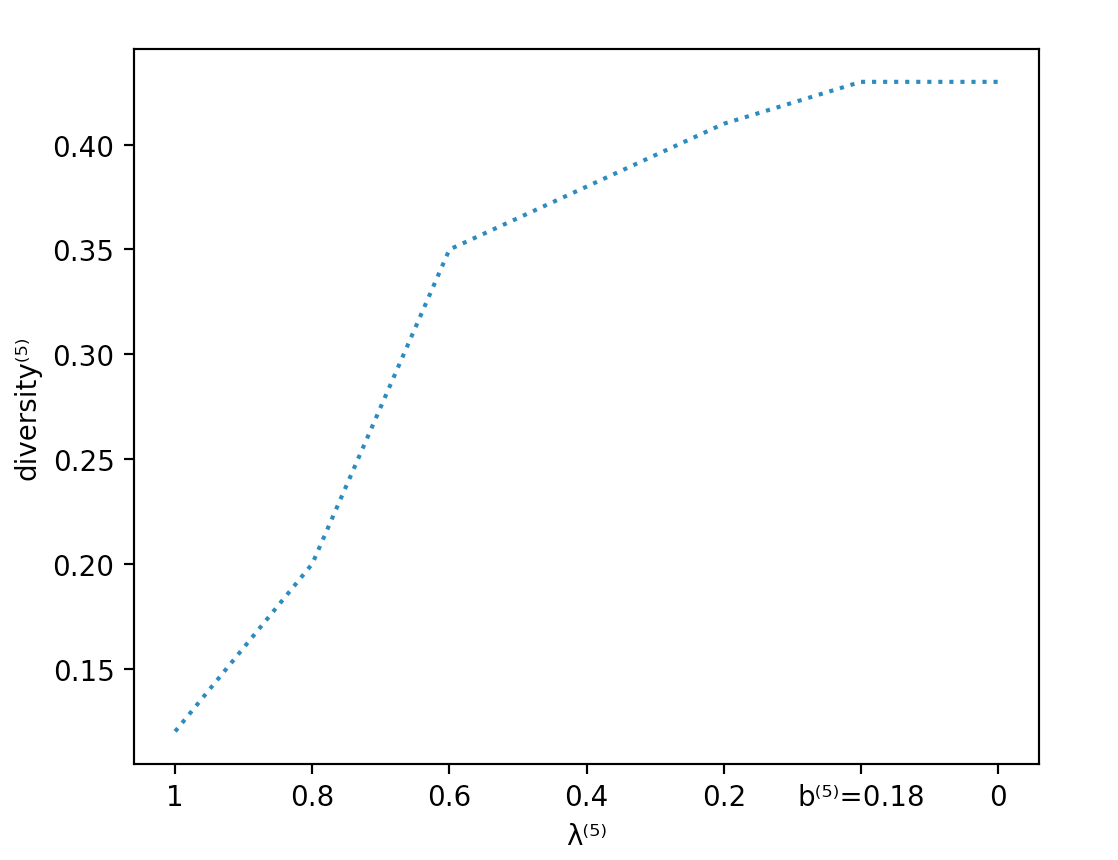}

\caption{Controlling the diversity w.r.t specific-level features by tuning one term of ERE.}
\label{figure:6}
\end{figure}

\subsection{Text-to-image generation}
StackGAN++ is proposed to generate diverse images whose contents is corresponding to given descriptive sentences. We chose it as the baseline model in this task, and the task is conducted on the CUB- 200-2011 dataset. 

Table \ref{table:5} presents quantitative comparisons between the proposed method and StackGAN++. And the qualitative results are shown in figure \ref{figure:4}, it shows that the proposed method improve the diversity without losing visual quality. 

\subsection{Controlling diversity}
To control diversity w.r.t specific level features, we chose $\lambda^{(5)}$ as the control variable in this experiment. We firstly computed the lower bound of $\lambda^{(5)}$ by choosing the minimum element in $A^{(5)}$ computed by Eq. \eqref{eq:8}. Figure \ref{figure:6} shows the results in text-to-image synthesis task, we can see $diversity^{(5)}$ is bigger with smaller $\lambda^{(5)}$, and reaches the limit when $\lambda^{(5)}<b^{(5)}$.
Figure \ref{figure:2} and figure \ref{figure:5} show that, in image translation and text-to-image generation, our method can generate outputs with different distributions of $diversity^{(l)}$, which is unachievable to the previous method. We also noticed that, in figure \ref{figure:4}, when $\lambda^{(j)}=1,\lambda^{(k)}=0$, the proposed method tends to change the observation angle to the bird, or to change the posture of the bird.

\subsection{Supplementary results}
We also conducted the three conditional image synthesis tasks using Eq. \eqref{eq:1}. The results show that our method outperforms the one using Eq. \eqref{eq:1} in tasks of categorical generation (85\% of the results are better), paired image-to-image generation (50\% of the results are better), unpaired image-to-image generation (69\% of the results are better) and text-to-image generation (100\% of the results are better).

\section{Related Work}
Unlike standard GANs only require an initial noise as input for the generator, cGANs concatenates external information (e.g. , the number of age) with the initial noise, during training, the correspondence between perceptual features (e.g., wrinkles of a face) and the additional information can be learned, as a result, an image with specific feature can be synthesized by a generator conditioned on the external information. However, it does not only inherit the mode collapse problem in standard GANs, but also worsen it when the input has a high-dimension information part \cite{175} . And it is pointed out that the noise vector is responsible for generating various images, due to its comparative low dimension, it is often ignored by the generator \cite{115}. More specifically, because the input pair has the same external information, once it is propagated through a convolutional layer, the distance between an output pair is smaller, especially when the external information has a high dimension. In this situation, two modes are prone to be collapsed into one if their initial noises are close.

    To alleviate mode collapse in cGANs, some approaches are proposed by recent works. In text-to-image tasks, \cite{74} uses a fully connected layer to sample additional noise from a Gaussian, the noise is then combined with the feature of an image as a whole conditional context to obtain more training pairs for augmentation. A different approach proposes an extra encoder network which can generate the noise vector given the generated image to help the generator construct a one-to-one mapping between the input and output, this approach was employed in image-to-image translation \cite{184}. However, the two approaches require extra time to generate augmentation pairs or to train an additional encoder, not to mention they are substantial task-specific modifications to cGANs, that is to say, they are less generalizable and charge more computational resource. Recently, \cite{115} and \cite{175} propose a regularization method (Diversity-sensitive cGANs) to amplify the diversity of the outputs, more specifically,  the regularization term encourages the generator to maximize the ratio of the distance between a noise pair to the distance between an image pair. The method needs no training overheads and can be easily extended to other frameworks of cGANs. But it ignores the diversity in hierarchical feature spaces, one of the results is that Diversity-sensitive GANs can generate the images of bird with various posture but is not able to synthesize different feather textures.

\section{Conclusion}
In this work, we applied a regularization term on the generator to address the mode collapse problem in cGANs. And we avoided  the computation cost for searching a hyperparameter growing exponentially with the number of layers of the generator. We minimized the differences between the real change of feature distance and a target change at all convolutional layers of the generator to control diversities w.r.t specific-level features. The proposed regularization term could be integrated into the existing different frameworks of cGANs. Our method is demonstrated on three image generation tasks and experimental results showed that our regularization can increase the diversity without decreasing visual quality.  As a future work, we will add more convolutional layers in the generator and validate how to control diversities more precisely. We also hope to conduct more experiments to find the dependencies of $ratio^{(i)}$.

\section{Acknowledgement}
Jinlong Li is supported by the National Key Research and Development Program of China (Grant No. 2017YFC0804001) and the National Natural Science Foundation of China (Grant No. 61573328).

\bibliographystyle{named}
\bibliography{mybib}{}

\end{document}